\newtheorem{thm}{Theorem}
\newtheorem{lem}{Lemma}
\newcommand{\x}{\mathbf{x}}
\newcommand{\X}{\mathbf{X}}
\newcommand{\nphi}{n_{\mathbf{\phi}}}
\newcommand{\mii}{\mathbb{I}}
\newcommand{\miiphi}{\mathbb{I}_{\mathbf{\phi}}}
\newcommand{\miin}{\widehat{\mathbb{I}}_{n}}
\newcommand{\ejxy}{\mathbb{E}_{(\mathbf{X},Y)}}
\newcommand{\emxy}{\mathbb{E}_{\mathbf{X}}\mathbb{E}_{Y}}
\newcommand{\ejxyn}{\mathbb{E}_{(\mathbf{X},Y)_n}}
\newcommand{\ex}{\mathbb{E}_{\mathbf{X}}}
\newcommand{\ey}{\mathbb{E}_{Y}}
\newcommand{\exn}{\mathbb{E}_{\mathbf{X}_n}}
\newcommand{\eyn}{\mathbb{E}_{Y_n}}
\DeclareMathOperator{\PMI}{PMI}
\newcommand{\etal}{\textit{et al}. }
\newcommand{\ie}{\textit{i}.\textit{e}., }
\icmltitlerunning{Neural Network Classifier as Mutual Information Evaluator}
\begin{document}

\twocolumn[
\icmltitle{
Neural Network Classifier as Mutual Information Evaluator
}

\icmlsetsymbol{equal}{*}

\begin{icmlauthorlist}
\icmlauthor{Zhenyue Qin}{equal,anu}
\icmlauthor{Dongwoo Kim}{equal,anu,postech}
\icmlauthor{Tom Gedeon}{anu}
\end{icmlauthorlist}

\icmlaffiliation{anu}{School of Computing, Australian National University}
\icmlaffiliation{postech}{GSAI, POSTECH, Korea}

\icmlcorrespondingauthor{Zhenyue Qin}{zhenyue.qin@anu.edu.au}
\icmlcorrespondingauthor{Dongwoo Kim}{dongwoo.kim@anu.edu.au}


\vskip 0.3in
]




\printAffiliationsAndNotice{\icmlEqualContribution} 

\begin{abstract}
Cross-entropy loss with softmax output is a standard choice to train neural network classifiers. 
We give a new view of neural network classifiers with softmax and cross-entropy as mutual information evaluators. 
We show that when the dataset is balanced, training a neural network with cross-entropy maximises the mutual information between inputs and labels through a variational form of mutual information. Thereby, we develop a new form of softmax that also converts a classifier to a mutual information evaluator when the dataset is imbalanced. 
Experimental results show that the new form leads to better classification accuracy, in particular for imbalanced datasets. 
\end{abstract}

\section{Introduction}
Neural network classifiers play an important role in contemporary machine learning and computer vision~\cite{lecun2015deep}. Although many architectural choices and optimisation methods have been explored, relatively fewer considerations have been shown on the final layer of the classifier: the cross-entropy loss with the softmax output. 

The combination of softmax with cross-entropy is a standard choice to train neural network classifiers. It measures the cross-entropy between the ground truth label $y$ and the output of the neural network $\hat{y}$. The network's parameters are then adjusted to reduce the cross-entropy via back-propagation. While it seems sensible to reduce the cross-entropy between the labels and predicted probabilities, it still remains a question as to what relation the network aims to model between input $x$ and label $y$ via this loss function, \ie, softmax with cross-entropy. 

In this work, for neural network classifiers, we explorer the connection between \emph{cross-entropy with softmax} and \emph{mutual information between inputs and labels}. From a variational form of mutual information, we prove that optimising model parameters using the softmax with cross-entropy is equal to maximising the mutual information between input data and labels when the distribution over labels is uniform. This connection provides an alternative view on neural network classifiers: they are mutual information estimators. We further propose a probability-corrected version of softmax that relaxes the uniform distribution condition. We empirically demonstrate that our mutual information estimators can \emph{accurately} evaluate mutual information. We also show mutual information estimators can perform classification more accurately than traditional neural network classifiers. When the dataset is imbalanced, the estimators outperform the state-of-the-art classifier for our example.

\section{Preliminaries}
\label{sec:pre}
In this section, we first define the notations used throughout this paper. We then introduce the definition of mutual information and variational forms of mutual information. 

\subsection{Notation}
We let training data consist of $M$ classes and $N$ labelled instances as $\{ (\mathbf{x}_{i}, y_{i}) \}_{i=1}^{N}$, where $y_i \in \mathcal{Y} = \{ 1, ... , M \}$ 
is a class label of input $\mathbf{x}_i$. We let $n_{\mathbf{\phi}}(\mathbf{x}): \mathcal{X} \rightarrow \mathbb{R}^M$ be a neural network parameterised by $\phi$, where $\mathcal{X}$ is a space of input $\mathbf{x}$. 
We assume $\mathcal{X}$ to be a compact subset of $D$-dimensional Euclidean space. 
We denote by $P_{XY}$ some joint distribution over $\mathcal{X} \times \mathcal{Y}$, with $(\mathbf{X}, Y) \sim P_{XY}$ being a pair of random variables. $P_{X}$ and $P_{Y}$ are the marginal distributions of $\mathbf{X}$ and $Y$, respectively. We remove a subscript from the distribution if it is clear from context.

\subsection{Variational Bounds of Mutual Information}
Mutual information evaluates the mutual dependence between two random variables. 
The mutual information between $\mathbf{X}$ and $Y$ can be expressed as:
\begin{gather}
\label{eq:orig_mi_def}
\mathbb{I}(\mathbf{X}, Y) = \notag \\ 
\int_{\mathbf{x} \in \mathcal{X}} \bigg[ \sum_{y \in \mathcal{Y}} P(\mathbf{x}, y) \log \big( \frac{P(\mathbf{x}, y)}{P(\mathbf{x}) P(y)} \big) \bigg] d\mathbf{x}. 
\end{gather}

Equivalently, following~\cite{poole2019variational}, we may express the definition of mutual information in \autoref{eq:orig_mi_def} as:
\begin{equation}
\label{eq:cond_mi_def}
\mathbb{I}(\mathbf{X}, Y) = \mathbb{E}_{(\mathbf{X}, Y)} \bigg[ \log \frac{P(y | \mathbf{x})}{P(y)} \bigg],
\end{equation}
where $\mathbb{E}_{(\X, Y)}$ is the abbreviations of $\mathbb{E}_{(\X, Y) \sim P_{XY}}$. 
Computing mutual information directly from the definition is, in general, intractable due to integration. 

\textbf{Variational form}: Barber and Agakov introduce a commonly used lower bound of mutual information via a variational distribution $Q$~\cite{barber2003algorithm}, derived as the following form:
\begin{align}
\label{eq:ba_def}
\mathbb{I}(\mathbf{X}, Y) &= \mathbb{E}_{(\X, Y)} \bigg[ \log \frac{P(y | \mathbf{x})}{P(y)} \bigg] \notag \\
&= \mathbb{E}_{(\X, Y)} \bigg[ \log \frac{Q(y | \mathbf{x})}{P(y)} \frac{P(y | \mathbf{x})}{Q(y | \mathbf{x})} \bigg] \notag \\
&= \mathbb{E}_{(\X, Y)} \bigg[ \log \frac{Q(y | \mathbf{x})}{P(y)}
\bigg] \notag \\ 
&\ + 
\underbrace{\mathbb{E}_{(\X, Y)} \bigg[ \log \frac{P(\mathbf{x},y)}{Q(\mathbf{x},y)} \bigg]}_{D_{KL}(P(\mathbf{x}, y) || Q(\mathbf{x}, y))} -   
\underbrace{\mathbb{E}_{(\X)} \bigg[ \log \frac{P(\mathbf{x})}{Q(\mathbf{x})} \bigg]}_{D_{KL}(P(\mathbf{x}) || Q(\mathbf{x}))} \notag \\
&\ge \mathbb{E}_{(\X, Y)} \bigg[ \log \frac{Q(\mathbf{x},y)}{P(\x)P(y)} \bigg]. 
\end{align}
The inequality in \autoref{eq:ba_def} holds since KL divergence maintains non-negativity. This lower bound is tight when variational distribution $Q(\mathbf{x},y)$ converges to joint distribution $P(\mathbf{x},y)$, i.e., $Q(\mathbf{x},y) = P(\mathbf{x},y)$.

The form in \autoref{eq:ba_def} is, however, still hard to compute since it is not easy to make a tractable and flexible variational distribution $Q(\mathbf{x},y)$. Variational distribution $Q(\mathbf{x},y)$ can be considered as a constrained function which has to satisfy the probability axioms. Especially, the constraint is challenging to model with a function estimator such as a neural network. To relax the function constraint, McAllester \etal \cite{mcallester2018formal} further apply reparameterisation and define $Q(\mathbf{x},y)$ in terms of an unconstrained function $f_{\phi}$ parameterised by $\phi$ as:
\begin{equation}
\label{eq:repara_q}
Q(\mathbf{x},y) = \frac{P(\x)P(y)}{E_{y' \sim P_Y}[ \exp(f_{\mathbf{\phi}}(\mathbf{x}, y')) ]} \exp(f_{\mathbf{\phi}}(\mathbf{x}, y)).
\end{equation}
As a consequence, the variational lower bound of mutual information $\mathbb{I}(\mathbf{X}, Y)$ can be rewritten with function $f_{\mathbf{\phi}}$ as:
\begin{align}
\label{eq:mi_f_low_bound}
\mathbb{I}(\X, Y) \ge \mathbb{E}_{(\X, Y)} \bigg[ \log \frac{\exp(f_{\mathbf{\phi}}(\mathbf{x}, y))}{ E_{y'}[ \exp(f_{\mathbf{\phi}}(\mathbf{x}, y')) ]}  \bigg].
\end{align}
Thus, one can estimate mutual information without any constraint on $f$.
Through the reparameterisation, the MI estimation can be recast as an optimisation problem.

\section{NN Classifiers as MI Estimators}
\label{sec:nn_as_mi_evaluators}
In this section, we prove that a neural network classifier with cross entropy loss and softmax output estimates the mutual information between inputs and labels.

To view neural network classifiers as mutual information estimators, we need to discuss two separate cases related to the dataset: whether it is balanced or imbalanced. 

\subsection{Softmax with Balanced Dataset}
Softmax is widely used to map outputs of neural networks into a categorical probabilistic distribution for classification. Given neural network $n(\x):\mathcal{X} \rightarrow \mathbb{R}^{M}$, softmax $\sigma:\mathbb{R}^{M} \rightarrow \mathbb{R}^{M}$ is defined as:
\begin{align}
\sigma(n(\x))_y = \frac{\exp( n(\x)_y )}{\sum_{y'=1}^{M}\exp( n(\x)_{y'})}.
\end{align}
Expected cross-entropy is often employed to train a neural network with softmax output. The expected cross-entropy loss is
\begin{align}
\label{eq:cross_entropy}
L = - \mathbb{E}_{(\X,Y)}[ n(\x)_y - \log({\sum_{y'=1}^{M}\exp( n(\x)_{y'})}) ],
\end{align}
where the expectation is taken over the joint distribution of $X$ and $Y$. Given a training set, one can train the model with an empirical distribution of the joint distribution. We present an interesting connection between cross-entropy with softmax and mutual information in the following theorem. In a bid for conciseness, we only provide proof sketches for \autoref{thm:equality} and \autoref{thm:softmax_im} here. Please refer to the appendix for rigorous proofs. 
\begin{thm}
 \label{thm:equality}
 Let $f_\phi(\x,y)$ be $n(\x)_y$. Infimum of the expected cross-entropy loss with softmax outputs is equivalent to the mutual information between input and output variables up to constant $\log M$ under uniform label distribution. 
\end{thm}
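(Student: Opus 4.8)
The plan is to rewrite the expected cross-entropy loss so that it lines up directly with the variational lower bound in \autoref{eq:mi_f_low_bound}, and then to exploit the uniform label assumption to convert the softmax normaliser into the expectation normaliser, at the cost of a single additive $\log M$ term. First I would substitute $f_\phi(\x,y) = n(\x)_y$ into \autoref{eq:cross_entropy} and collect the two terms inside the expectation into a single logarithm, giving
\begin{equation*}
L = -\,\mathbb{E}_{(\X,Y)}\bigg[ \log \frac{\exp(f_\phi(\x,y))}{\sum_{y'=1}^{M}\exp(f_\phi(\x,y'))} \bigg].
\end{equation*}
This is already the negative of a quantity resembling the right-hand side of \autoref{eq:mi_f_low_bound}; the only mismatch is in the denominator, where the loss carries a plain sum $\sum_{y'}\exp(f_\phi(\x,y'))$ while the bound carries the expectation $E_{y'}[\exp(f_\phi(\x,y'))]$ over $P_Y$.

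The second step is to reconcile the two denominators using the uniform label distribution. When $P_Y(y') = 1/M$ for every $y'$, the expectation factorises as $E_{y'}[\exp(f_\phi(\x,y'))] = \frac{1}{M}\sum_{y'=1}^{M}\exp(f_\phi(\x,y'))$, so the softmax normaliser equals $M$ times the normaliser appearing in the bound. Pulling this factor of $M$ out of the logarithm yields
\begin{equation*}
L = \log M - \mathbb{E}_{(\X,Y)}\bigg[ \log \frac{\exp(f_\phi(\x,y))}{E_{y'}[\exp(f_\phi(\x,y'))]} \bigg],
\end{equation*}
that is, the loss equals $\log M$ minus exactly the variational lower bound of \autoref{eq:mi_f_low_bound}.

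Finally I would take the infimum over $\phi$. Since the bracketed term is a lower bound on $\mathbb{I}(\X,Y)$ for every $\phi$, minimising $L$ is the same as maximising that bound; and because the bound becomes tight precisely when the reparameterised $Q(\x,y)$ of \autoref{eq:repara_q} equals $P(\x,y)$, the supremum of the bound is $\mathbb{I}(\X,Y)$ itself. This gives $\inf_\phi L = \log M - \mathbb{I}(\X,Y)$, the claimed equivalence up to the constant $\log M$. The main obstacle is this last tightness step: the clean equality (as opposed to an inequality) requires that the unconstrained family $\{f_\phi\}$ be rich enough to realise the optimal $Q$, i.e.\ to represent $P(y \mid \x)$ exactly. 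I expect the rigorous argument to lean on the compactness of $\mathcal{X}$ together with a universal-approximation property of the network, ensuring the infimum is attained (or approached) rather than merely bounded; verifying that the reparameterisation in \autoref{eq:repara_q} indeed admits a solution with $Q = P$ is where the real work lies.
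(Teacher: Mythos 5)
Your proposal is correct and follows essentially the same route as the paper: rewriting the softmax cross-entropy as $\log M$ minus the variational bound of \autoref{eq:mi_f_low_bound} under the uniform label assumption, and then closing the gap between the supremum of that bound and $\mathbb{I}(\X,Y)$ exactly as the paper's appendix does, via the optimal critic $f^{\ast} = \log\frac{P(\x,y)}{P(\x)P(y)}$ (for which the normaliser equals $1$) together with universal approximation on the compact domain. The ``real work'' you flag at the end is precisely the content of the paper's first appendix lemma, so your outline matches the intended proof.
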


\begin{proof}
See Appendix. 
\end{proof}
Note that the constant does not change the gradient of the objective. Consequently, the solutions of both the mutual information maximisation and the softmax cross-entropy minimisation optimisation problems are the same. 

\subsection{Softmax with Imbalanced Dataset}
The uniform label distribution assumption in \autoref{thm:equality} is restrictive since we cannot access the true label distribution, often assumed to be non-uniform. To relax the restriction, we propose a probability-corrected softmax (PC-softmax):
\begin{align}
\label{eq:prob_cor_softmax}
\sigma_p(n(\x))_y = \frac{\exp( n(\x)_y )}{\sum_{y'=1}^{M}P(y')\exp( n(\x)_{y'})},
\end{align}
where $P(y')$ is a distribution over label $y'$. 
In experiments, we optimise the revised softmax with empirical distribution on ${P}(y')$ estimated from the training set. We show the equivalence between optimising the classifier and maximising mutual information with the new softmax below.

\begin{thm}
\label{thm:softmax_im}
The mutual information between two random variables $X$ and $Y$ can be obtained via the infimum of cross-entropy with PC-softmax in \autoref{eq:prob_cor_softmax}, using a neural network. Such an evaluation is strongly consistent. 
\end{thm}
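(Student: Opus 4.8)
The plan is to reduce the first claim to the variational bound of \autoref{eq:mi_f_low_bound} and the second claim to a consistency argument in the style of neural mutual-information estimators. First I would take $f_\phi(\x,y) = n(\x)_y$ as in \autoref{thm:equality} and write out the cross-entropy loss $L_p = -\ejxy[\log \sigma_p(n(\x))_y]$ attached to the PC-softmax of \autoref{eq:prob_cor_softmax}. The key observation is that the corrected normaliser is a label-marginal expectation, $\sum_{y'=1}^{M} P(y')\exp(n(\x)_{y'}) = \ey[\exp(n(\x)_{Y})]$, so that $-L_p$ is \emph{exactly} the right-hand side of \autoref{eq:mi_f_low_bound}. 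Hence minimising $L_p$ over $\phi$ maximises the Barber--Agakov bound and $-\inf_\phi L_p \le \mii(\X,Y)$.

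Next I would argue tightness, so that the infimum returns the true mutual information rather than a strict lower bound. By \autoref{eq:ba_def} the bound is attained exactly when the reparameterised $Q(\x,y)$ of \autoref{eq:repara_q} equals $P(\x,y)$; solving the reparameterisation shows this occurs whenever $n(\x)_y = \log\big(P(y|\x)/P(y)\big) + c(\x)$ for an arbitrary $\x$-dependent offset $c(\x)$ that cancels inside the PC-softmax. Using universal approximation of neural networks on the compact domain $\mathcal{X}$, such an optimiser can be approached to within any accuracy, giving $\mii(\X,Y) = -\inf_\phi L_p$, which proves the first claim.

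The remaining and most delicate part is strong consistency of the empirical estimator $\miin$, formed by replacing $P(y')$ with its empirical frequency and $\ejxy$ with the average over the $N$ training pairs. I would split the error into an \textbf{approximation} term and an \textbf{estimation} term. Approximation is handled by the universal-approximation argument above: for every $\epsilon>0$ there is a fixed network whose population objective lies within $\epsilon$ of $\mii(\X,Y)$. For estimation I would prove a uniform strong law of large numbers over the parameter family: since $\mathcal{X}$ is compact and $\nphi$ is continuous, the log-sum-exp integrand is uniformly bounded, and restricting $\phi$ to a compact set yields almost-sure uniform convergence of the empirical objective to its population value. A triangle inequality then gives $\miin \to \mii(\X,Y)$ almost surely.

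The hard part will be the estimation step, because two sources of randomness are coupled: the Monte-Carlo average over the sample and the plug-in empirical label distribution sitting \emph{inside} the logarithm of the normaliser. I would first show the empirical label frequencies converge almost surely (finite label set, strong law), then show that the map from the empirical measures to the objective is uniformly continuous in $\phi$, so the two approximations compose without breaking convergence. Equal care is needed to exchange $\inf_\phi$ with the almost-sure limit, which is precisely where compactness of the parameter space and the uniform convergence above are indispensable.
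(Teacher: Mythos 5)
Your proposal takes essentially the same route as the paper's proof: you identify the PC-softmax normaliser with $\ey[\exp(n(\x)_y)]$ so that the negative loss is the variational objective of \autoref{eq:mi_f_low_bound}, then split the error into an approximation step driven by the universal approximation theorem with the pointwise mutual information as the optimal critic (the paper's first lemma, where your offset $c(\x)$ is simply taken to be zero) and an estimation step handled by a uniform law of large numbers over a compact parameter domain with Lipschitz control of $\log$ and $\exp$ (the paper's second lemma), finally combining the two by a triangle inequality. This is precisely the paper's decomposition, which in turn mirrors the consistency proof of MINE, so your plan is correct and matches the published argument.
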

See the proofs in the appendix for the proof of \autoref{thm:softmax_im}.

Mutual information is often used in generative models to find the maximally informative representation of an observation~\cite{hjelm2018learning,zhao2017infovae}, whereas its implication in classification has been unclear so far. The results of this section imply that the neural network classifier with softmax optimises its weights to maximise the mutual information between inputs and labels under the uniform label assumption. 

\begin{table}[t!]
    \centering
    \begin{tabular}{c r r r}
        \toprule
        $y$ & $\mu$ & \# samples & $p(y)$\\
        \midrule 
        0 & $\mathbf{0}$ & 6,000 & 0.07 \\
        1 & $+\mathbf{2}$ & 12,000 & 0.13 \\
        2 & $-\mathbf{2}$ & 18,000 & 0.20 \\
        3 & $+\mathbf{4}$ & 24,000 & 0.27 \\
        4 & $-\mathbf{4}$ & 30,000 & 0.33 \\
        \bottomrule
    \end{tabular}
    \vspace{1em}
    \caption{Synthetic dataset description. $\mu$ is a mean vector for each Gaussian distribution. \# samples denotes the number (resp. prior distribution) of samples with the non-uniform prior assumption. For the test with the uniform prior assumption, we use 12,000 samples from each distribution.}
    \label{table:synthetic_dataset_spec}
\end{table}

\begin{table}[t!]
\begin{subtable}[t]{\linewidth}
\centering
\resizebox{0.75\textwidth}{!}{
\begin{tabular}{r r r r r}
\toprule
\multirow{2}{*}{Dimension}&
\multirow{2}{*}{Accuracy(\%)}& 
\multicolumn{3}{c}{Mutual information}\\
\cmidrule(lr{1em}){3-5}
& & MC & MINE & softmax \\
\midrule
1 & 74 & 1.03 & 1.00 & 0.99\\
2 & 85 & 1.30 & 1.22 & 1.28\\
5 & 94 & 1.54 & 1.46 & 1.48\\
10 & 98 & 1.60 & 1.54 & 1.54\\
\bottomrule
\end{tabular}
}
\caption{Results with balanced datasets.}
\label{table:syn_balanced}
\end{subtable}

\vspace{1em}

\begin{subtable}[t]{\linewidth}
\centering
\resizebox{0.99\textwidth}{!}{
\begin{tabular}{r r r r r r r}
\toprule
\multirow{2}{*}{Dimension}&
\multicolumn{2}{c}{Accuracy(\%)}&
\multicolumn{4}{c}{Mutual information}\\
\cmidrule(lr{1em}){2-3}
\cmidrule(lr{1em}){4-7}
& softmax & PC-softmax & MC & MINE & softmax & PC-softmax \\
\midrule
1 & 79 & 79 & 1.02 & 0.99 & 1.11 & 0.96 \\
2 & 87 & 88 & 1.23 & 1.17 & 1.31 & 1.20 \\
5 & 93 & 95 & 1.44 & 1.27 & 1.41 & 1.31 \\
10 & 95 & 96 & 1.48 & 1.22 & 1.36 & 1.34 \\
\bottomrule
\end{tabular}
}
\caption{Results with unbalanced datasets.} 
\label{table:syn_imbalanced}
\end{subtable}

\caption{Mutual information estimation results with softmax-based classification neural networks. MC represents the estimated mutual information via Monte Carlo methods.}
\label{table:cam_info_cam}
\end{table}

\section{Impact of PC-softmax on Classification}
\label{sec:mi_class_exp}
In this section, we measure the empirical performance of PC-softmax as mutual information (MI) and the influence of PC-softmax on the classification task. Since it is impossible to obtain correct MI from real-world datasets, we first construct synthetic data with known properties to measure the MI estimation performance, and then we use two real-world datasets to measure the impact of PC-softmax on classification tasks.

\subsection{Mutual information estimation task}

To construct a synthetic dataset with a pair of continuous and discrete variables, we employ a Gaussian mixture model:
\begin{align}
    P(x) &= \sum_{y=1}^{M} P(y) \mathcal{N}(\mathbf{x} | \mathbf{\mu}_y, \mathbf{\Sigma}_y) \notag\\
    P(x | y) &= \mathcal{N}(\mathbf{x} | \mathbf{\mu}_y, \mathbf{\Sigma}_y), \notag
\end{align}
where $P(y)$ is a prior distribution over the labels. To form classification, we use $x$ as an input variable, and $y$ as a label.

For the experiments, we use five mixtures of isotropic Gaussian, each of which has a unit diagonal covariance matrix with different means. We set the parameters of the mixtures to make them overlap in significant proportions.

We generate two sets of datasets: one with uniform prior and the other with non-uniform prior distribution over labels, $p(y)$. For the uniform prior, we sample 12,000 data points from each Gaussian, and for the non-uniform prior, we sample unequal number of data points from each Gaussian. In addition, we vary the dimension of Gaussian distribution from 1 to 10. The detailed statistics for the Gaussian parameters and the number of samples are available in \autoref{table:synthetic_dataset_spec}. To train classification models, we divide the dataset into training, validation and test sets. We use the validation set to find the best parameter configuration of the classifier.

We aim to compare the difference of true and softmax-based estimated mutual information $\mathbb{I}(\mathbf{X}, Y)$. 
The mutual information is, however, intractable. We thus approximate it via Monte Carlo (MC) methods using the true probability density function, expressed as:
\begin{equation}
    \label{eq:mc_mi}
    \mathbb{I}(\X, Y) \approx \frac{1}{N} \sum_{i=1}^{N} \log \left( \frac{P(\mathbf{x}_i | y_i)}{P(\mathbf{x}_i)} \right), 
\end{equation}
where $(\mathbf{x}_i, y_i)$ forms a paired sample. \autoref{eq:mc_mi} attains equality as $N$ approaches infinity. 


We use four layers of a feed-forward neural network with the ReLU as an activation for internal layers and softmax as an output layer
We train the model with softmax on balanced dataset and with PC-softmax on unbalanced dataset. We compare the experimental results against mutual information neural estimator (MINE) proposed in \cite{belghazi2018mutual}. Note that MINE requires having a pair of input and label variables as an input of an estimator network, the classification-based MI-estimator seems more straightforward for measuring mutual information between inputs and labels of classification tasks.

\autoref{table:syn_balanced} summarises the experimental results with the balanced dataset. With the balanced dataset, there is no difference between softmax and PC-softmax. Note that the MC estimator has access to explicit model parameters for estimating mutual information, whereas the softmax estimator measures mutual information based on the model outputs without accessing the true distribution. We could not find a significant difference between MC and the softmax estimator. Additionally, we report the accuracy of the trained model on the classification task.

\autoref{table:syn_imbalanced} summarises the experimental results with the unbalanced dataset. The results show that the PC-softmax slightly under-estimates mutual information when compared with the other two approaches. It is worth noting that the classification accuracy of PC-softmax consistently outperforms the original softmax. 
The results show that the MINE slightly under-estimate the MI as the input dimension increases.


\subsection{Classification task}

We test the classification performance of softmax and PC-softmax with two real-world datasets: MNIST~\cite{lecun2010mnist} and CUB-200-2011~\cite{wah2011caltech}. 

We construct balanced and unbalanced versions of the MNIST dataset.
For the balanced-MNIST, we use a subset of the original dataset. For the unbalanced-MNIST, we randomly subsample one tenth of instances for digits 0, 2, 4, 6 and 8 from the balanced-MNIST. 
With CUB-200-2011, we follow the same training and validation splits as in~\cite{cui2018large}. As a result of such splitting, the training set is approximately balanced, where out of the total 200 classes, 196 of them contain 30 instances and the remaining 6 classes include 29 instances. To construct an unbalanced dataset, similar to MNIST, we randomly drop one half of the instances from one half of the bird classes.

We adopt a simple convolutional neural network as a classifier for MNIST. The model contains two convolutional layers with max pooling layer and the ReLU activation, followed by two fully connected layers with the final softmax. For CUB-200-2011, we apply the same architecture as Inception-V3~\cite{cui2018large}.
We measure both the micro accuracy and the average per-class accuracy of the two softmax versions on both datasets. The average per-class accuracy alleviates the dominance of the majority classes in unbalanced datasets. The classification results are shown in \autoref{table:emp_soft}. PC-softmax is significantly more accurate on unbalanced datasets for the average per-class accuracy. 


\begin{table}[t!]
\begin{subtable}[t]{\linewidth}
\centering
\resizebox{0.99\textwidth}{!}{
\begin{tabular}{c r r r r r}
\toprule
\multirow{2}{*}{Dataset} & \multicolumn{2}{c}{MNIST} & \multicolumn{2}{c}{CUB-200-2011} \\
&
Balanced& 
Unbalanced& 
Balanced& 
Unbalanced\\
\midrule
softmax & 97.95 & 96.81 & 89.23 & 89.21  \\
PC-softmax & 97.91 & 96.86 & 89.18 & \textbf{89.73}* \\
\bottomrule
\end{tabular}
}
\caption{Classification accuracy (\%).}

\end{subtable}

\vspace{1em}

\begin{subtable}[t]{\linewidth}
\centering
\resizebox{0.99\textwidth}{!}{
\begin{tabular}{c r r r r r}
\toprule
\multirow{2}{*}{Dataset} & \multicolumn{2}{c}{MNIST} & \multicolumn{2}{c}{CUB-200-2011} \\
& 
Balanced& 
Unbalanced& 
Balanaced& 
Unbalanced\\
\midrule
softmax & 97.95 & 95.05 & 89.21 & 84.63 \\
PC-softmax & 97.91 & \textbf{96.30} & 89.16 & \textbf{87.69} \\

\bottomrule
\end{tabular}
}
\caption{Average per-class accuracy (\%).}
\end{subtable}

\vspace{1em}
\caption{Classification accuracy of using softmax and PC-softmax. Numbers of instances for different labels are the same for a balanced dataset and are significantly distinct for an unbalanced dataset. Bold values denote p-values less than 0.05 with the Mann-Whitney U statistical test\protect\footnotemark.}
\label{table:emp_soft}
\end{table}
\section{Conclusion}

We have shown the connection between mutual information estimators and neural network classifiers through the variational form of mutual information. The connection explains the rationale behind the use of sigmoid, softmax and cross-entropy from an information-theoretic perspective.
There is previous work that called the negative log-likelihood (NLL) loss as maximum mutual information estimation~\cite{bahl1986maximum}. Despite this naming similarity, that work does not show the relationship between softmax and mutual information that we have shown here. 
The connection between neural network classifiers and mutual information evaluators provides more than an alternative view on neural network classifiers. Thereby, we improve the classification accuracy, in particular when the datasets are unbalanced. The new mutual information estimators even outperform the prior state-of-the-art neural network classifiers.

\newpage
\nocite{langley00}

\bibliography{refs}
\bibliographystyle{icml2021}

\appendix
\newpage
\onecolumn
\section{Proofs}
\label{sec:proofs}
In this section, we provide rigorous proofs of \autoref{thm:equality} and \autoref{thm:softmax_im}. The structure of proof is similar to the proof used in \cite{belghazi2018mutual}. We assume the input space $\Omega = \X \times Y$ being a compact domain of $\mathcal{R}^{d}$, where all measures are Lebesgue and are absolutely continuous. We restrict neural networks to produce a single continuous output, denoted as $n(\x)_y$. We restate the two theorems for quick reference.

\textbf{Theorem 1.} \textit{Let $f_\phi(\x,y)$ be $n(\x)_y$. Minimising the cross-entropy loss of softmax-normalised neural network outputs is equivalent to maximising \autoref{eq:mi_f_low_bound}, \ie the lower bound of mutual information, under the uniform label distribution. That is, if the dataset is balanced, then training a neural network via minimising cross-entropy with softmax equals enhancing a estimator toward more accurately evaluating the mutual information between data and label.}

\textbf{Theorem 2.} \textit{The mutual information between two random variable $X$ and $Y$ can be obtained via the infimum of cross-entropy with PC-softmax in \autoref{eq:prob_cor_softmax}. Such an evaluation is strongly consistent. }

The proof technique that we have used to prove Theorem 2 is similar to the one used in \cite{belghazi2018mutual}.

\begin{lem}
Let $\eta > 0$. There exists a family of neural network functions $\nphi$ with parameter $\mathbf{\phi}$ in some compact domain such that
\begin{align}
    |\mii(\X;Y) - \miiphi(\X;Y)| \le \eta, 
\end{align}
where 
\begin{align}
    \miiphi(\X;Y) = \underset{\mathbf{\phi}}{\sup} \ \ejxy \big[ \nphi \big] - \ex \log \ey [\exp(\nphi)_y]. 
\end{align}
\end{lem}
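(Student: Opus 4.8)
The plan is to follow the three-step template of \cite{belghazi2018mutual}: first identify the critic that makes the variational objective attain the true mutual information, then invoke universal approximation to realise that critic approximately with a neural network whose parameters live in a compact set, and finally control the resulting error via a continuity estimate on the objective functional. Throughout I write $J(f) = \ejxy[f] - \ex \log \ey[\exp(f)]$ for the objective evaluated at a measurable critic $f = f(\x, y)$, where in the second term $\ey$ integrates over a dummy label; then $\miiphi(\X; Y) = \sup_\phi J(\nphi)$ with $f(\x,y) = \nphi(\x)_y$.

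First I would record that \autoref{eq:mi_f_low_bound} already gives $J(f) \le \mii(\X; Y)$ for every $f$ (the displayed integrand there equals $f - \log\ey[\exp(f)]$, and the log-term depends only on $\x$), so taking the supremum yields $\miiphi(\X; Y) \le \mii(\X; Y)$. Hence it suffices to prove the one-sided estimate $\miiphi(\X; Y) \ge \mii(\X; Y) - \eta$. To locate the optimum I would take $f^\ast(\x, y) = \log\frac{P(y \mid \x)}{P(y)}$. A direct substitution shows $\ey[\exp(f^\ast)] = \sum_{y'} P(y') \frac{P(y' \mid \x)}{P(y')} = 1$, so the second term of $J(f^\ast)$ vanishes and $J(f^\ast) = \ejxy\big[\log\frac{P(y \mid \x)}{P(y)}\big] = \mii(\X; Y)$ by \autoref{eq:cond_mi_def}. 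I would also note that $J$ is invariant under $f \mapsto f + g(\x)$, since the added $g(\x)$ cancels between the two terms; this reconciles $f^\ast$ with the reparameterisation in \autoref{eq:repara_q}.

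Second, because $f^\ast$ may be unbounded or discontinuous (the densities can vanish or blow up on the compact domain $\Omega$), I would replace it by a bounded continuous surrogate $\tilde f$ with $\lvert J(\tilde f) - \mii(\X; Y)\rvert \le \eta/2$, obtained by clipping $f^\ast$ to $[-T, T]$, mollifying, and letting $T \to \infty$ through a dominated/monotone convergence argument that uses finiteness of $\mii(\X; Y)$ to control the discarded tail. With $\tilde f$ continuous and bounded by some $B_0$ on the compact set $\Omega$, the universal approximation theorem then supplies a neural network $\nphi$, with $\phi$ ranging over a compact parameter set, so that $\lVert \nphi - \tilde f\rVert_\infty \le \epsilon$ for any prescribed $\epsilon \le 1$; in particular $\lvert \nphi \rvert \le B := B_0 + 1$.

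Finally I would establish continuity of $J$ at $\tilde f$. The first term obeys $\lvert \ejxy[\nphi - \tilde f]\rvert \le \epsilon$. For the log-sum-exp term, since $\nphi$ and $\tilde f$ both lie in $[-B, B]$, the map $\exp$ is Lipschitz there and $\ey[\exp(\cdot)] \in [e^{-B}, e^{B}]$, on which $\log$ is Lipschitz; chaining these bounds gives $\lvert \ex\log\ey[\exp(\nphi)] - \ex\log\ey[\exp(\tilde f)]\rvert \le e^{2B}\,\epsilon$. Choosing $\epsilon$ small enough forces $\lvert J(\nphi) - J(\tilde f)\rvert \le \eta/2$, and combining with the previous step yields $\miiphi(\X; Y) \ge J(\nphi) \ge \mii(\X; Y) - \eta$, which completes the bound. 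The main obstacle is the second step: the $\exp$ inside the objective amplifies approximation errors, so reducing $f^\ast$ to a bounded continuous target is essential, and the finiteness-of-$\mii$ argument that controls the truncated tail is the delicate part of the proof.
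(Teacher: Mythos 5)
Your proposal is correct and follows essentially the same route as the paper's proof (itself modelled on the MINE consistency argument): both take the pointwise mutual information as the optimal critic, for which the partition term satisfies $\ey[\exp(f^\ast)]=1$ so that the objective attains $\mii(\X;Y)$, then invoke the universal approximation theorem over a compact parameter domain and transfer the sup-norm approximation error to the objective through Lipschitz estimates, using non-negativity of the gap to reduce to a one-sided bound. The only differences are technical: for the log term the paper applies Jensen's inequality and $\log x \le x-1$ against $\emxy[\exp(\nphi^{\ast}(\x)_y)]=1$, so it needs only the Lipschitz constant of $\exp$ and no lower bound on the argument of the log, whereas you chain Lipschitz constants of $\exp$ and $\log$ (hence your $e^{-B}$ floor); and where the paper simply assumes the PMI is bounded by a constant $M$, your truncation-and-mollification step handles its possible unboundedness explicitly, which is the more careful treatment of that point.
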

\begin{proof}
Let $\nphi^{\ast}(\X,Y) = \PMI(\X,Y) = \log \frac{P(\X, Y)}{P(\X)P(Y)}$. We then have: 
\begin{align}
    \ejxy [\nphi^{\ast}(\x)_y] = \mii(\X;Y) \quad \text{and} \quad \emxy[ \exp{(\nphi^{\ast}(\x)_y)} ] = 1. 
\end{align}
Then, for neural network $\nphi$, the gap $\mii(\X;Y) - \miiphi(\X;Y)$: 
\begin{align}
    \mii(\X;Y) - \miiphi(\X;Y) 
    &= \ejxy [ \nphi^{\ast}(\X,Y) - \nphi(\X,Y) ] + \ex \log \ey [\exp(\nphi)_y] \notag \\
    &\le \ejxy [ \nphi^{\ast}(\X,Y) - \nphi(\X,Y) ] + \log \emxy[\exp(\nphi(\x)_y)] \notag \\
    &\le \ejxy [ \nphi^{\ast}(\X,Y) - \nphi(\X,Y) ] \notag\\
    & \quad + \emxy[\exp(\nphi(\x)_y) - \exp{(\nphi^{\ast}(\x)_y)} ].  \label{eq:mi_diff}
\end{align}
\autoref{eq:mi_diff} is positive since the neural mutual information estimator evaluates a lower bound. The equation uses Jensen's inequality and the inequality $\log x \le x - 1$. 

We assume $\eta > 0$ and consider $\nphi^{\ast}(\x)_y$ is bounded by a positive constant $M$. Via the universal approximation theorem \cite{hornik1989multilayer}, there exists $\nphi(\x)_y \le M$ such that
\begin{align}
    \ejxy | \nphi^{\ast}(\X,Y) - \nphi(\X,Y) | \le \frac{\eta}{2} \quad \text{and} \quad \emxy|\nphi(\x)_y - \nphi^{\ast}(\x)_y | \le \frac{\eta}{2} \exp{(-M)}. 
    \label{eq:mi_diff_up}
\end{align}

By utilising that $\exp$ is Lipschitz continuous with constant $\exp(M)$ over $(-\infty, M]$, we have
\begin{align}
    \emxy|\exp(\nphi(\x)_y) - \exp(\nphi^{\ast}(\x)_y) | \le \exp(M) \cdot  \emxy|\nphi(\x)_y - \nphi^{\ast}(\x)_y | \le \frac{\eta}{2}.
    \label{eq:mi_diff_lips}
\end{align}

Combining \autoref{eq:mi_diff}, \autoref{eq:mi_diff_up} and \autoref{eq:mi_diff_lips}, we then obtain
\begin{align}
    |\mii(\X;Y) - \miiphi(\X;Y)| &\le \ejxy | \nphi^{\ast}(\X,Y) - \nphi(\X,Y) | \notag \\
    &\quad + \emxy|\exp(\nphi(\x)_y) - \exp(\nphi^{\ast}(\x)_y) | \notag \\
    &= \frac{\eta}{2} + \frac{\eta}{2} = \eta. 
\end{align}
\end{proof}

\begin{lem}
Let $\eta > 0$. Given a family of neural networks $\nphi$ with parameter $\mathbf{\phi}$ in some compact domain, there exists $N \in \mathbb{N}$ such that
\begin{align}
    \forall n \ge N, \text{Pr}\big( | \miin(\X;Y) - \nphi(\X;Y) | \le \eta \big) = 1. 
\end{align}
\begin{proof}
We start by employing the triangular inequality: 
\begin{gather}
    | \miin(\X;Y) - \miiphi(\X;Y) | \notag \\
    \le \underset{\mathbf{\phi}}{\sup} \ |\ejxy[\nphi^{\ast}(\X,Y)] - \ejxyn[\nphi^{\ast}(\X,Y)]| \notag \\ 
    + \underset{\mathbf{\phi}}{\sup} \ |\ex \log \ey [\exp(\nphi)_y] - \exn \log \eyn [\exp(\nphi)_y]| \label{eq:mi_diff_n_phi}
\end{gather}

We have stated previously that neural network $\nphi$ is bounded by $M$, \ie $\nphi(\x)_y \le M$. Using the fact that $\log$ is Lipschitz continuous with constant $\exp(M)$ over the interval $[\exp(-M), \exp(M)]$. We have
\begin{equation}
    | \log \ey [\exp(\nphi)_y] - \log \eyn [\exp(\nphi)_y] | \le \exp(M) \cdot |\ey [\exp(\nphi)_y] - \eyn [\exp(\nphi)_y]|
\end{equation}

Using the uniform law of large numbers \cite{geer2000empirical}, we can choose $N \in \mathbb{N}$ such that for $\forall n \ge N$ and with probability one
\begin{equation}
\underset{\mathbf{\phi}}{\sup} \ |\ey [\exp(\nphi)_y] - \eyn [\exp(\nphi)_y]| \le \frac{\eta}{4} \exp(-M). 
\end{equation}
That is, 
\begin{align}
    | \log \ey [\exp(\nphi)_y] - \log \eyn [\exp(\nphi)_y] | \le \frac{\eta}{4}
\end{align}
Therefore, using the triangle inequality we can rewrite \autoref{eq:mi_diff_n_phi} as: 
\begin{gather}
    | \miin(\X;Y) - \miiphi(\X;Y) |
    \le \underset{\mathbf{\phi}}{\sup} \ |\ejxy[\nphi^{\ast}(\X,Y)] - \ejxyn[\nphi^{\ast}(\X,Y)]| \notag \\ 
    + \underset{\mathbf{\phi}}{\sup} \ |\ex \log \ey [\exp(\nphi)_y] - \exn \log \ey [\exp(\nphi)_y]| + \frac{\eta}{4}. 
    \label{eq:mi_diff_add_const}
\end{gather}

Using the uniform law of large numbers again, we can choose $N \in \mathbb{N}$ such that for $\forall n \ge N$ and with probability one
\begin{align}
    \underset{\mathbf{\phi}}{\sup} \ |\ex \log \ey [\exp(\nphi)_y] - \exn \log \ey [\exp(\nphi)_y]| \le \frac{\eta}{4} 
    \label{eq:margin_less}
\end{align}
and: 
\begin{align}
    \underset{\mathbf{\phi}}{\sup} \ |\ejxy[\nphi^{\ast}(\X,Y)] - \ejxyn[\nphi^{\ast}(\X,Y)]| \le \frac{\eta}{2}. 
    \label{eq:join_less}
\end{align}

Combining \autoref{eq:mi_diff_add_const}, \autoref{eq:margin_less} and \autoref{eq:join_less} leads to
\begin{equation}
    | \miin(\X;Y) - \underset{\phi}{\sup} \ \miiphi(\X;Y) | \le \frac{\eta}{2} + \frac{\eta}{4} + \frac{\eta}{4} = \eta. 
\end{equation}
\end{proof}
\end{lem}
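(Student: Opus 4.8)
The plan is to bound $|\miin(\X;Y) - \miiphi(\X;Y)|$ by splitting it, via the triangle inequality, into two contributions that can be controlled separately: a \emph{joint} term comparing the population expectation $\ejxy[\nphi^{\ast}]$ against its empirical average $\ejxyn[\nphi^{\ast}]$, and a \emph{marginal} term comparing $\ex \log \ey[\exp(\nphi)_y]$ against its empirical counterpart $\exn \log \eyn[\exp(\nphi)_y]$. Because $\miiphi$ is defined through a supremum over $\phi$ ranging in a compact domain, I would take the supremum over $\phi$ inside each of the two pieces, so that the eventual bound holds uniformly in the parameter rather than only pointwise.

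First I would handle the marginal term. Here the outer and inner expectations are composed through the nonlinear map $\log$, so a law of large numbers cannot be applied directly. Instead I would exploit that each network output is bounded, $\nphi(\x)_y \le M$, which forces $\exp(\nphi)_y$ into the interval $[\exp(-M), \exp(M)]$ on which $\log$ is Lipschitz with constant $\exp(M)$. This linearises the difference of logarithms into a constant multiple of $|\ey[\exp(\nphi)_y] - \eyn[\exp(\nphi)_y]|$, which is now an ordinary empirical-mean deviation. Applying the uniform law of large numbers to the bounded, continuous family $\{\exp(\nphi)_y\}$ over the compact parameter set then yields, with probability one, $\sup_\phi |\ey[\exp(\nphi)_y] - \eyn[\exp(\nphi)_y]| \le (\eta/4)\exp(-M)$ for all $n$ beyond some threshold, so after multiplying by the Lipschitz constant the inner discrepancy is at most $\eta/4$. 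A second application of the uniform law of large numbers, now to the outer expectation over $\X$, absorbs the remaining sampling error into another $\eta/4$.

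Next I would treat the joint term, which is simpler since $\nphi^{\ast}$ is a fixed bounded function: the uniform law of large numbers gives $\sup_\phi |\ejxy[\nphi^{\ast}] - \ejxyn[\nphi^{\ast}]| \le \eta/2$ with probability one for all sufficiently large $n$. Taking $N$ to be the maximum of the finitely many thresholds produced above and adding the pieces gives $|\miin - \sup_\phi \miiphi| \le \eta/2 + \eta/4 + \eta/4 = \eta$ almost surely for every $n \ge N$, which is the claim.

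The main obstacle is the marginal term and, more precisely, the justification of the uniform law of large numbers. The nested nonlinearity of $\ex \log \ey[\cdot]$ means pointwise convergence of empirical means is not enough; I need convergence that is uniform over the whole parameter family $\phi$, which in turn requires the function class to be suitably regular (e.g. a Glivenko--Cantelli class, guaranteed here by continuity in $\phi$, compactness of the domain, and the uniform bound $M$). Propagating the almost-sure, uniform-in-$\phi$ control through the Lipschitz linearisation of $\log$ — and keeping the hypothesis $\nphi(\x)_y \le M$ active throughout so that the constant $\exp(M)$ stays finite — is the delicate part of the argument.
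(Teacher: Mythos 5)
Your proposal is correct and follows essentially the same route as the paper's proof: the same triangle-inequality decomposition into a joint term and a marginal term, the same Lipschitz linearisation of $\log$ on $[\exp(-M),\exp(M)]$ with constant $\exp(M)$, the same uniform law of large numbers applications with budgets $\eta/4$, $\eta/4$, $\eta/2$, and the same final combination. Your explicit remarks about taking the maximum of the finitely many thresholds $N$ and about why the function class is Glivenko--Cantelli (continuity in $\phi$, compact parameter domain, uniform bound $M$) are details the paper leaves implicit, but they do not change the argument.
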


Now, combining the above two lemmas, we prove that our mutual information evaluator is strongly consistent. 
\begin{proof}
Using the triangular inequality, we have
\begin{align}
    |\mii(\X;Y) - \miin(\X;Y)| \le |\mii(\X;Y) - \miiphi(\X;Y)| + |\miin(\X;Y) - \nphi(\X;Y)| \le \epsilon. 
\end{align}
\end{proof}

\end{document}